\def\eqref#1{equation~\ref{#1}}
\def\1{\bm{1}}
\DeclareMathAlphabet{\mathsfit}{\encodingdefault}{\sfdefault}{m}{sl}
\SetMathAlphabet{\mathsfit}{bold}{\encodingdefault}{\sfdefault}{bx}{n}
\newtheorem{definition}{Definition}[section]
\newtheorem{proposition}{Proposition}[section]
\newcommand{\printfnsymbol}[1]{%
  \textsuperscript{\@fnsymbol{#1}}%
}
\title{Counterfactual Generation Under \\ Confounding}
\author{Abbavaram Gowtham Reddy\thanks{Equal contribution}\\
IIT Hyderabad, India\\
\texttt{cs19resch11002@iith.ac.in}\\
\And
Saloni Dash\printfnsymbol{1}\\
Microsoft Research Bengaluru, India\\
\texttt{salonidash77@gmail.com}\\
\And
Amit Sharma\\
Microsoft Research Bengaluru, India \\
\texttt{amshar@microsoft.com}\\
\And
\hspace{44pt} Vineeth N Balasubramanian\\
\hspace{44pt} IIT Hyderabad, India\\
\hspace{44pt} \texttt{vineethnb@iith.ac.in}\\
}
\begin{document}

\maketitle
\begin{abstract}
A machine learning model, under the influence of observed or unobserved confounders in the training data, can learn spurious correlations and fail to generalize when deployed. For image classifiers, augmenting a training dataset using counterfactual examples has been empirically shown to break spurious correlations.  However, the counterfactual generation task itself becomes more difficult as the level of confounding increases. Existing methods for counterfactual generation under confounding consider a fixed set of interventions (e.g., texture, rotation) and are not flexible enough to capture diverse data-generating processes. Given a causal generative process, we formally characterize the adverse effects of confounding on any downstream tasks and show that the correlation between generative factors (attributes) can be used to quantitatively measure confounding between generative factors. To minimize such correlation, we propose a counterfactual
generation  method that learns to modify the value of any attribute in an image and generate new images given a set of observed attributes, even when the dataset is highly confounded. These counterfactual images are then used to regularize the downstream classifier such that the learned representations are the same across various generative factors conditioned on the class label. Our method is  computationally efficient, simple to implement, and works well for any number of generative factors and confounding variables. Our experimental results on both synthetic (MNIST variants) and real-world (CelebA) datasets show the usefulness of our approach.

\end{abstract}
\vspace{-10pt}
\section{Introduction}
\vspace{-5pt}
A confounder is a variable that causally influences two or more variables that are not necessarily directly causally dependent~\citep{pearl2001direct}. Often, the presence of confounders in a data-generating process is the reason for spurious correlations among variables in the observational data. The bias caused by such confounders is inevitable in observational data, making it challenging to identify invariant features representative of a target variable~\citep{anchor,maximin,oodgen}. For example, the \textit{demographic area} an individual resides in often confounds the \textit{race} and perhaps the level of \textit{education} that individual receives. Using such observational data, if the goal is to predict an individual's \textit{salary}, a machine learning model may exploit the spurious correlation between \textit{education} and \textit{race} even though those two variables should ideally be treated as independent variables. Removing the effects of confounding in trained machine learning models has shown to be helpful in various applications such as zero or few-shot learning, disentanglement, domain generalization, counterfactual generation, algorithmic fairness, healthcare, etc.~\citep{suter2018robustly,kilbertus2020sensitivity, causalviewofcompo,zhao2020training, Yue_2021_CVPR,cgn,modelpatching,ali,reddy2022causally,dinga}.

In observational data, confounding may be observed or unobserved and can pose various challenges in learning models depending on the task. For example, disentangling spuriously correlated features using generative modeling when there are confounders is challenging~\citep{cgn,reddy2022causally,funke2022disentanglement}. As stated earlier, a classifier may rely on non-causal features to make predictions in the presence of confounders~\citep{causal_repr_learning}. Recent years have seen a few efforts to handle the spurious correlations caused by confounding effects in observational data~\citep{trauble,cgn,modelpatching,reddy2022causally}. However, these methods either make strong assumptions on the underlying causal generative process or require strong supervision. In this paper, we study the adversarial effect of confounding in observational data on a classifier's performance and propose a mechanism to marginalize such effects when performing data augmentation using counterfactual data. Counterfactual data generation provides a mechanism to address such issues arising from confounding and building robust learning models without the additional task of building complex generative models.

The causal generative processes considered throughout this paper are shown in Figure~\ref{fig:datagen}(a). We assume that a set of generative factors (attributes) $Z_1,Z_2,\dots,Z_n$ (e.g., \textit{background, shape, texture}) and a label $Y$ (e.g., 
\textit{cow}) \textit{cause} a real-world observation $X$ (e.g., an image of a cow in a particular background) through an unknown causal mechanism $g$~\citep{peters2017elements}. To study the effects of confounding, we consider $Y, Z_1, Z_2, \dots, Z_n$ to be confounded by a set of confounding variables $C_1,\dots,C_m$ (e.g., certain breeds of \textit{cows} appear only in certain \textit{shapes} or \textit{colors} and appear only in certain \textit{countries}). Such causal generative processes have been considered earlier for other kinds of tasks such as disentanglement~\cite{suter2018robustly,von2021self,reddy2022causally}. The presence of confounding variables results in spurious correlations among generative factors in the observed data, whose effect we aim to remove using counterfactual data augmentation.

\begin{wrapfigure}[15]{r}{0.5\linewidth}
    \centering
    \vspace{-9pt}
    \includegraphics[width=0.5\textwidth]{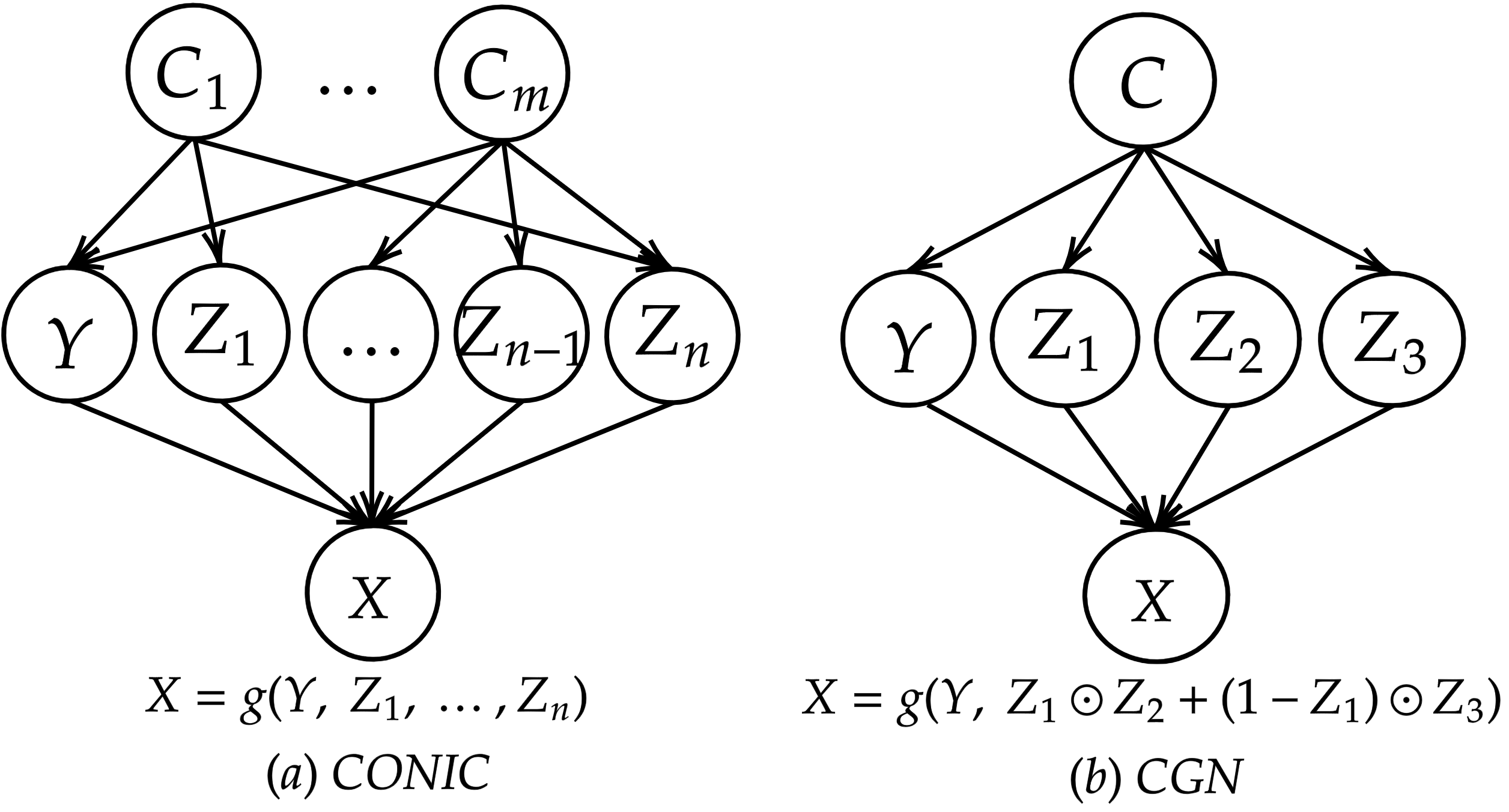}
    \vspace{-14pt}
    \caption{\textit{(a)} causal data generating process considered in this paper (CONIC = Ours); \textit{(b)} causal data generating process considered in CGN~\citep{cgn}.}
    \label{fig:datagen}
\end{wrapfigure}
A related recent effort by~\citep{cgn} proposes Counterfactual Generative Networks (CGN) to address this problem using a data augmentation approach. This work assumes each image to be composed of three Independent Causal Mechanisms (ICMs)~\citep{Peters2017} responsible for three fixed factors of variations: \textit{shape, texture}, and \textit{background} (as represented by $Z_1, Z_2$, and $Z_3$ in Figure~\ref{fig:datagen}(b). This work then trains a generative model that learns three ICMs for \textit{shape, texture}, and \textit{background} separately, and combines them in a deterministic fashion to generate observations. Once the ICMs are learned, sampling images by making interventions to these mechanisms give counterfactual data that can be used along with training data to improve classification results. However, fixing the architecture to specific number and types of mechanisms (\textit{shape, texture, background}) is not generalizable, and may not directly be applicable to settings where the number of underlying generative factors is unknown. It is also computationally expensive to train different generative models for each aspect of an image such as \textit{texture}, \textit{shape} or \textit{background}. 

In this work, we begin with quantifying confounding in observational data that is generated by an underlying causal graph (more general than considered by CGN) of the form shown in Figure~\ref{fig:datagen}(a). We then provide a counterfactual data augmentation methodology called CONIC (\underline{CO}unterfactual ge\underline{N}erat\underline{I}on under \underline{C}onfounding). We hypothesize that the counterfactual images generated using the proposed CONIC method provide a mechanism to marginalize the causal mechanisms responsible for spurious correlations (i.e., causal arrows from $C_i$ to $Z_j$ for some $i, j$). We take a generative modeling approach and propose a neural network architecture based on conditional CycleGAN~\citep{cycleGAN} to generate counterfactual images. The proposed architecture improves CycleGAN's ability to generate quality counterfactual images under confounded data by adding additional contrastive losses to distinguish between fixed and modified features, while learning the cross domain translations. To demonstrate the usefulness of such counterfactual images, we consider classification as a downstream task and study the performance of various models on unconfounded test set. Our key contributions include:
\vspace{-15pt}
\begin{itemize}[leftmargin=*]
\setlength\itemsep{-0.25em}
    \item We formally quantify confounding in causal generative processes of the form in Fig ~\ref{fig:datagen}(a), and study the relationship between correlation and confounding between any pair of generative factors.
    \item We present a counterfactual data augmentation methodology to generate counterfactual instances of observed data, that can work even under highly confounded data ($\sim 95\%$ confounding) and provides a mechanism to marginalize the causal mechanisms responsible for confounding.
    \item We modify conditional CycleGAN to improve the quality of generated counterfactuals. Our method is computationally efficient and easy to implement.
    \item Following previous work, we perform extensive experiments on well-known benchmarks -- three MNIST variants and CelebA datasets -- to showcase the usefulness of our proposed methodology in improving the accuracy of a downstream classifier.
\end{itemize}

\section{Related Work}
\vspace{-11pt}
\label{relatedwork}

\noindent \textbf{Counterfactual Inference:}
\citep{pearl2009causality}, in his seminal text on causality, provided a three-step procedure for generation of a counterfactual data instance, given an observed instance: (i) \textit{Abduction:} abduct/recover the values of exogenous noise variables; (ii) \textit{Action:} perform the required intervention; and (iii) \textit{Prediction:} generate the counterfactual instance.
One however needs access to the underlying structural causal model (SCM) to perform the above steps for counterfactual generation. Since real-world data do not come with an underlying SCM, many recent efforts have focused on modeling the underlying causal mechanisms generating data under various assumptions. These methods then perform the required intervention on specific variables in the learned model to generate counterfactual instances that can be used for various downstream tasks such as classification, fairness, explanations etc.~\citep{kusner2017counterfactual,joo2020gender, denton2019detecting, zmigrod2019counterfactual, pitis2020counterfactual,yoon2018ganite,counterfactual_continuous,tractableinference}.

\noindent \textbf{Generating Counterfactuals by Learning ICMs:}
In a more recent effort, assuming any real-world image is generated with three independent causal mechanisms for \textit{shape, texture, background}, and a \textit{composition} mechanism of the first three,~\citep{cgn} developed Counterfactual Generative Networks (CGN) that generate counterfactual images of a given image. CGN trains three Generative Adversarial Networks (GANs)~\citep{goodfellow2014generative} to learn \textit{shape, texture, background} mechanisms and combine these three mechanisms using a composition mechanism $g$ as $g(shape, texture, background) = shape\odot texture+(1-shape)\odot background$ where $\odot$ is the Hadamard product. Each of these independent mechanisms is given an input of noise vector $u$ and a label $y$ specific to that independent mechanism while training. Once the independent mechanisms are trained, counterfactual images are generated by sampling a label and a noise vector corresponding to each mechanism and then feeding the input to CGN. Finally, a classifier is trained with both original and counterfactual images to achieve better test time accuracy, showing the usefulness of CGN. However, such deterministic nature of the architecture is not generalizable to the case where the number of underlying generative factors are unknown and it is computationally infeasible  to train generative models for specific aspect of an image such as texture/background. 

\noindent \textbf{Disentanglement and Data Augmentation:}
The spurious correlations among generative factors have been considered in disentanglement~\citep{funke2022disentanglement, provably}. The general idea in these efforts is to separate the causal predictive features from non-causal/spurious predictive features to predict an outcome.  Our goal is different from disentanglement, and we focus on the performance of a downstream classifier instead of separating the sources of generative factors. Traditional data augmentation methods such as rotation, scaling, corruption, etc.~\citep{augmix,cutout, mixup, cutmix} do not consider the causal generative process and hence they can not remove the confounding in the images via data augmentation (e.g., color and shape of an object can not be separated using simple augmentations). We hence focus on counterfactual data augmentations that is focused on marginalizing the confounding effect caused by confounders.

A similar effort to our paper is by~\citep{modelpatching} who use CycleGAN to generate counterfactual data points. However, they focus on the performance of a subgroup (a subset of data with specific properties) which is different from our goal of controlling confounding in the entire dataset. Another recent work by~\citep{oodgen} considers spurious correlations among generative factors and uses CycleGAN to generate counterfactual images. Compared to these efforts, rather than using CycleGAN directly, we propose a CycleGAN-based architecture that is optimized for \textit{controlled} generation using contrastive losses.

\noindent \textbf{Applications of Counterfactuals:}
Augmenting the training data with appropriate counterfactual data has shown to be helpful in many applications ranging from vision to natural language tasks~\citep{joo2020gender,fadernetwork,kusner2017counterfactual,kaushik2019learning,ali}. ~\citep{joo2020gender} identified existing biases in computer vision APIs deployed in the real world by Amazon, Google, IBM, and Clarifai by looking at the differences made by those APIs on counterfactual images that differ by protected/sensitive attributes (e.g., race and gender).
Using locally independent causal mechanisms,~\citep{pitis2020counterfactual} augmented training data with counterfactual data points in a model-free reinforcement learning setting. Here, the idea is to use any two factual trajectories of an episode and combine the two trajectories at a particular point in time to generate the counterfactual data point, which will then be added to the replay buffer. Independently factored samples are essential to get plausible and realistic counterfactual instances. 

\section{Information Theoretic Measure of Confounding}
\noindent \textbf{Background and Problem Formulation:}
Let $\{Z_1, Z_2, \dots, Z_n\}$ be a set of random variables denoting the generative factors of an observed data point $X$, and $Y$ be the label of the observation $X$. Each generative factor $Z_i$ (e.g., \textit{color}) can take on a value form a discrete set of values $\{z_i^1,\dots,z_i^d\}$ (e.g., \textit{red, green} etc.). Let the set $S=\{Y, Z_1, \dots, Z_n\}$ generates $N$ real-world observations $\{X_i\}_{i=1}^N$ through an unknown causal mechanism $g$. Each $X_i$ can be thought of as an observation generated using the causal mechanism $g$ with certain intervention on the variables in the set $S$. Variables in $S$ may potentially be confounded by a set of confounders $C=\{C_1,\dots,C_m\}$ that denote real-world confounding such as selection bias.  Let $\mathcal{D}$ be the dataset of real-world observations along with corresponding values taken by $\{Y, Z_1,\dots,Z_n\}$. Causal graph in Figure~\ref{fig:datagen}(a) shows the general form of this setting. From a causal effect perspective, each variable in $S$ has a direct causal influence on the observation $X$ (e.g., the causal edge $Z_i\rightarrow X$) and also has non-causal influence on $X$ via the confounding variables $C_1, \dots, C_m$ (e.g., $Z_i\leftarrow C_j \rightarrow Z_k\rightarrow X$ for some $C_j$ and $Z_k$). These paths via the confounding variables, in which there is an incoming arrow to the variables in $S$, are also referred to as \textit{backdoor paths}~\citep{pearl2001direct}. Due to the presence of backdoor paths, we may observe spurious correlations among the variables in $S$ in the observational data $\mathcal{D}$. 

In any downstream application where $\mathcal{D}$ is used to train a model (e.g., classification, disentanglement etc.), it is desirable to minimize or remove the effect of confounding variable to ensure that a model is not exploiting the spurious correlations in the data to arrive at a decision. In this paper, we present a method to remove the effect of such confounding variables using counterfactual data augmentation. We start by studying the relationship between the amount of confounding and the correlation between any pair of generative factors in causal processes of the form shown in Figure~\ref{fig:datagen}(a).
\begin{definition}
\label{noconfounding}
\textbf{No Confounding~\citep{pearl2009causality}.} In a causal directed acyclic graph (DAG) $\mathcal{G}=(\mathcal{V},\mathcal{E})$, where $\mathcal{V}$ denotes the set of variables and $\mathcal{E}$ denotes the set of directed edges denoting the direction of causal influence among the variables in $\mathcal{V}$, an ordered pair $(Z_i,Z_j); Z_i,Z_j \in \mathcal{V}$ is unconfounded if and only if $p(Z_i=z_i|do(Z_j=z_j))=p(Z_i=z_i|Z_j=z_j), \forall z_i,z_j$. Where $do(Z_i=z_i)$ denotes an intervention to the variable $Z_i$ with the value $z_i$. This definition can also be extended to disjoint sets of random variables.
\end{definition}
Definition~\ref{noconfounding} provides the notion of \textit{no confounding}, however, to quantify the notion of \textit{confounding} between a pair of variables, we consider the following definition that relates the interventional distribution $p(Z_i|do(Z_j))$ and the conditional distribution $p(Z_i|Z_j)$.
\begin{definition}
\label{def:directed_info}
\textbf{(Directed Information~\citep{directed,info_theoretic}).} In a causal directed acyclic graph (DAG) $\mathcal{G}=(\mathcal{V},\mathcal{E})$, where $\mathcal{V}$ denotes the set of variables and $\mathcal{E}$ denotes the set of directed edges denoting the direction of causal influence among the variables in $\mathcal{V}$, the directed information from a variable $Z_i\in\mathcal{V}$ to another variable $Z_j\in\mathcal{V}$ is denoted by $I(Z_i\rightarrow Z_j)$. It is defined as follows.
\begin{equation}
\begin{aligned}
        I(Z_i\rightarrow Z_j)&\coloneqq D_{KL}(p(Z_i|Z_j)|| p(Z_i|do(Z_j))|p(Z_j))
        \coloneqq \mathbb{E}_{p(Z_i,Z_j)} \log \frac{p(Z_i|Z_j)}{p(Z_i|do(Z_j))}
\end{aligned}
\end{equation}
\end{definition}
Using Definitions~\ref{noconfounding} and ~\ref{def:directed_info}, it is easy to see that the variables $Z_i$ and $Z_j$ are unconfounded if and only if $I(Z_j\rightarrow Z_i)=0$. Non zero directed information $I(Z_j\rightarrow Z_i)$ entails that, $p(Z_i|Z_j)\neq p(Z_i|do(Z_j))$ and hence the presence of confounding (if there is no confounder, $p(Z_i|Z_j)$ should be equal to $p(Z_i|do(Z_j))$). Also, it is important to note that the directed information is not symmetric (i.e., $I(Z_i\rightarrow Z_j) \neq I(Z_j\rightarrow Z_i)$)~\citep{jiao2013universal}. We use this fact in defining the measure of confounding below. Since we need to quantify the notion of \textit{confounding} (as opposed to \textit{no confounding}), we use directed information to quantify \textit{confounding} as defined below.
\begin{definition}
\label{def:confounding}
\textbf{(An Information Theoretic Measure of Confounding.)}
In a causal directed acyclic graph (DAG) $\mathcal{G}=(\mathcal{V},\mathcal{E})$, where $\mathcal{V}$ denotes the set of variables and $\mathcal{E}$ denotes the set of directed edges denoting the direction of causal influence among the variables in $\mathcal{V}$, the amount of confounding between a pair of variables $Z_i\in\mathcal{V}$ and $Z_j\in\mathcal{V}$ is equal to $I(Z_i\rightarrow Z_j)+I(Z_j\rightarrow Z_i)$. 
\end{definition}
Since directed information is not symmetric, we define the measure of confounding to include the directed information from one variable to the other for a given pair of variables $Z_i, Z_j$. We now relate the quantity $I(Z_i\rightarrow Z_j)+I(Z_j\rightarrow Z_i)$ with the correlation between generative factors so that it is easy to quantify the amount of confounding in observational data. Before that, we present the following proposition which will be used in the proof of the subsequent proposition.
\begin{proposition}
\label{proposition1}
In causal processes of the form~\ref{fig:datagen}(a), the interventional distribution $p(Z_i|do(Z_j))$ is same as the marginal distribution $p(Z_i)$.
\end{proposition}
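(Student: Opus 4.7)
The plan is to apply Rule 3 of Pearl's do-calculus (insertion/deletion of actions) after a d-separation check in the mutilated graph $\mathcal{G}_{\overline{Z_j}}$. The key structural observation about Figure~\ref{fig:datagen}(a) is that there is no directed edge between $Z_i$ and $Z_j$: the only paths connecting them are (i) backdoor paths passing through the common ancestors $C_1,\dots,C_m$, and (ii) the collider path $Z_j \rightarrow X \leftarrow Z_i$.

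First I would form $\mathcal{G}_{\overline{Z_j}}$ by deleting every incoming edge into $Z_j$. In this graph $Z_j$ has no parents, so each backdoor path through any confounder $C_k$ is severed at $Z_j$. The only surviving path is $Z_j \rightarrow X \leftarrow Z_i$, but $X$ is an unconditioned collider on this path, so d-separation blocks it. Hence $Z_i$ and $Z_j$ are d-separated in $\mathcal{G}_{\overline{Z_j}}$.

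Given this d-separation, Rule 3 of do-calculus immediately yields $p(Z_i \mid do(Z_j)) = p(Z_i)$, which is exactly the claim. Equivalently, one could check the identity via the truncated factorization: replacing the mechanism $p(Z_j \mid C_1,\dots,C_m)$ by a point mass at the intervened value and marginalizing the remaining unaltered factors over all other variables returns $\sum_{\mathbf{c}} p(Z_i \mid \mathbf{c})\, p(\mathbf{c}) = p(Z_i)$, where $\mathbf{c}$ ranges over joint assignments to $(C_1,\dots,C_m)$.

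The hard part here is essentially non-existent; the result is purely structural. The one subtlety I would be careful about is confirming that the collider $X$ really does block path (ii) in the mutilated graph, and that edge deletion into $Z_j$ cannot inadvertently open any new path (it cannot, since removing edges only destroys connections). This remains the case when $Z_i$ and $Z_j$ share multiple confounders, which is absorbed by treating $\mathbf{C}=(C_1,\dots,C_m)$ as a single tuple.
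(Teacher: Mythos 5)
Your proposal is correct, and it is worth distinguishing its two halves. Your primary route---Rule~3 of the do-calculus applied after checking d-separation of $Z_i$ and $Z_j$ in the mutilated graph $\mathcal{G}_{\overline{Z_j}}$---is a somewhat different derivation from the paper's: the paper instead applies the backdoor adjustment formula over the set $C'$ of common causes of $Z_i$ and $Z_j$, writes $p(Z_i|do(Z_j))=\sum_{C'}p(Z_i|Z_j,C')p(C')$, and then invokes the conditional independence $Z_i \perp Z_j \mid C'$ (the collider path through $X$ stays blocked) to collapse the sum to $p(Z_i)$. Your secondary ``equivalent check'' via the truncated factorization, $\sum_{\mathbf{c}}p(Z_i|\mathbf{c})p(\mathbf{c})=p(Z_i)$, is essentially the same computation the paper performs. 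What the Rule-3 version buys you is a one-step, purely graphical argument that does not require first justifying that $C'$ satisfies the backdoor criterion (in particular, that paths through other $Z_k$ or through $Y$ are already blocked at colliders); what the paper's version buys is an explicit elementary computation that does not presuppose the do-calculus machinery. One small point in your favor: you correctly identify $X$ as the collider blocking the direct path $Z_j\rightarrow X\leftarrow Z_i$, whereas the paper's proof attributes the relevant blocking to ``$Y$ is a collider,'' which at best refers only to the secondary backdoor paths of the form $Z_i\leftarrow C\rightarrow Y\leftarrow C'\rightarrow Z_j$ and reads like a slip for $X$.
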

\begin{proof}
\vspace{-3pt}
In causal processes of the form~\ref{fig:datagen}(a), let $C'$ denote the set of all confounding variables that are part of some backdoor path from $Z_i$ to $Z_j$. That is $C'=\{C|Z_i\leftarrow C \rightarrow Z_j\}$ for some $i,j$. Then we can evaluate the quantity $p(Z_i|do(Z_j))$ as
\begin{align*}
        p(Z_i|do(Z_j)) &= \sum_{C'} p(Z_i|Z_j,C')p(C') 
        = \sum_{C'} p(Z_i|C')p(C') = \sum_{C'} p(Z_i,C')
        = p(Z_i)
\end{align*} 
Where the first equality is because of the adjustment formula~\citep{pearl2001direct} and the second equality is because of the fact that $Y$ is a collider in causal graph~\ref{fig:datagen}(a) and hence conditioned on $C'$, $Z_i$ is independent of $Z_j$.
\end{proof}
\begin{proposition}
For causal generative processes of the form~\ref{fig:datagen}(a), the correlation between a pair of generative factors $(Z_i,Z_j)$ is proportional to the amount of confounding between $Z_i$ and $Z_j$.
\end{proposition}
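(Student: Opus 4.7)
The plan is to first collapse each directed-information term to a Shannon mutual information using Proposition~\ref{proposition1}. Because $p(Z_i\mid do(Z_j)) = p(Z_i)$ and $p(Z_j\mid do(Z_i)) = p(Z_j)$ in graphs of the form in Figure~\ref{fig:datagen}(a), Definition~\ref{def:directed_info} immediately yields
\[
I(Z_j \to Z_i) \;=\; \mathbb{E}_{p(Z_i,Z_j)}\!\log\!\frac{p(Z_i\mid Z_j)}{p(Z_i\mid do(Z_j))} \;=\; \mathbb{E}_{p(Z_i,Z_j)}\!\log\!\frac{p(Z_i\mid Z_j)}{p(Z_i)} \;=\; I(Z_i;Z_j),
\]
and by symmetry $I(Z_i \to Z_j) = I(Z_i;Z_j)$. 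Thus the confounding measure of Definition~\ref{def:confounding} equals $2\,I(Z_i;Z_j)$: the amount of confounding between any pair of generative factors is (up to a factor of two) their Shannon mutual information, which is the entire causal content of the ``confounding'' quantity in these graphs.

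The second step is to link $I(Z_i;Z_j)$ to a scalar correlation $\rho(Z_i,Z_j)$. I would invoke the classical identity $I(Z_i;Z_j) = -\tfrac12\log(1-\rho^2)$ that holds for jointly Gaussian pairs, together with its extension through the Hirschfeld--Gebelein--R\'enyi maximal-correlation coefficient $\rho^*$, which satisfies $I(Z_i;Z_j) \ge -\tfrac12\log\bigl(1-(\rho^*)^2\bigr)$ for general distributions and vanishes exactly when $\rho^*$ does. A first-order Taylor expansion at $\rho=0$ gives $I(Z_i;Z_j) = \tfrac12\rho^2 + O(\rho^4)$, so in the practically relevant regime of mild to moderate dependence the confounding measure is monotone in, and to leading order linear in, the squared correlation; combined with the first step this delivers the proportionality claim.

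The main obstacle is precisely that ``proportional'' is only strictly accurate in a linearized or Gaussian sense: for arbitrary discrete-valued $Z_i$ as considered here, the map between $I(Z_i;Z_j)$ and any particular correlation coefficient is merely monotone, not exactly linear. To make the statement rigorous I would therefore either (i) present the Gaussian identity as an exact illustrative case and interpret the general result as the logical equivalence $\rho(Z_i,Z_j) = 0 \iff I(Z_i;Z_j) = 0 \iff$ no confounding, or (ii) replace Pearson's $\rho$ by an appropriate discrete dependence measure (Cram\'er's $V$ or the HGR coefficient) and phrase the proposition as: the confounding measure is a monotone increasing function of the statistical dependence between $Z_i$ and $Z_j$. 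Either formulation suffices for the downstream \ourmethod{} procedure, whose objective is to drive this dependence---and hence the measured confounding---toward zero.
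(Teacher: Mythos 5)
Your proposal is correct and its substantive core is the same as the paper's: both use Proposition~\ref{proposition1} to replace $p(Z_i\mid do(Z_j))$ by $p(Z_i)$ inside the directed-information terms, so that the confounding measure of Definition~\ref{def:confounding} collapses to $2\,I(Z_i;Z_j)$ (the paper combines the two expectations before simplifying; you simplify each term separately, which is equivalent and slightly cleaner). Where you diverge is the final step: the paper simply asserts that ``non-zero mutual information implies positive correlation'' and declares proportionality, whereas you correctly flag that for general (discrete) variables the link between $I(Z_i;Z_j)$ and any correlation coefficient is only exact in the Gaussian case and otherwise monotone rather than linear, and you propose restating the claim via the vanishing equivalence or a monotone dependence measure such as HGR. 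Your treatment of that last step is more careful than the paper's, which leaves the word ``proportional'' unjustified; the paper's appendix table in fact only exhibits a monotone empirical relationship, consistent with your weaker, rigorous formulation.
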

\begin{proof}
\vspace{-3pt}
Expanding the quantity $I(Z_i\rightarrow Z_j)+I(Z_j\rightarrow Z_i)$, we get the following,
\begin{equation}
    \begin{aligned}
        &I(Z_i\rightarrow Z_j)+I(Z_j\rightarrow Z_i)
        = \mathbb{E}_{Z_i, Z_j} \left[ \log(\frac{p(Z_i|Z_j)}{p(Z_i|do(Z_j))}) \right]
        + \mathbb{E}_{Z_i, Z_j} \left[ \log(\frac{p(Z_j|Z_i)}{p(Z_j|do(Z_i))}) \right]\\
        &= \mathbb{E}_{Z_i, Z_j} \left[ \log(\frac{p(Z_i|Z_j) p(Z_j|Z_i)}{p(Z_i|do(Z_j))p(Z_j|do(Z_i))}) \right]
         =  \mathbb{E}_{Z_i, Z_j} \left[ \log(\frac{p(Z_i|Z_j) p(Z_j|Z_i)}{p(Z_i)p(Z_j)}) \right] \\
         &=\mathbb{E}_{Z_i, Z_j} \left[ \log(\frac{p(Z_i|Z_j)p(Z_j) p(Z_j|Z_i)p(Z_i)}{p(Z_i)p(Z_j)p(Z_i)p(Z_j)})\right] = \mathbb{E}_{Z_i, Z_j} \left[ \log(\frac{p(Z_i,Z_j) p(Z_j,Z_i)}{p(Z_i)^2p(Z_j)^2})\right]\\
        &  = 2\times \mathbb{E}_{Z_i, Z_j} \left[ \log(\frac{p(Z_i,Z_j)}{p(Z_i)p(Z_j)})\right]
         = 2 \times I(Z_i;Z_j)\\
    \end{aligned}
\end{equation}

Where $I(Z_i;Z_j)$ is the mutual information between $Z_i$ and $Z_j$. The third equality is due to Proposition~\ref{proposition1}. Since non-zero mutual information implies positive correlation, we see that the amount of confounding between $Z_i$ and $Z_j$ is directly proportional to the correlation between $Z_i$ and $Z_j$. Hence, we use the correlation as a measure of confounding between generative factors in the causal processes of the form~\ref{fig:datagen}(a). 
\end{proof}
Using the connection between the confounding and correlation in causal graph~\ref{fig:datagen}(a), our objective is to generate counterfactual data such that the resultant dataset after augmentation looks similar to the data obtained from a causal process where there is no confounding between generative factors (i.e., no paths of the from $Z_i\leftarrow C_j\rightarrow Z_k; \forall i, j, k$). Equivalently, our counterfactual data generation algorithm removes the spurious correlations between generative factors by marginalizing the causal arrows $C_i\rightarrow Z_j$ for some $i, j$. To understand how counterfactual instances break the correlations, consider the following definition.
\begin{definition}
\label{counterfactuals}
\textbf{(Counterfactual~\citep{pearl2009causality})}. Given an observed instance $X$ whose generative factors $Z_1,\dots,Z_i,\dots, Z_n$ take on the values $z_1, \dots,z_i,\dots, z_n$, the counterfactual instance $X'$ of $X$ (generated using the 3-step counterfactual inference procedure) differed from $X$ w.r.t. the generative factor $Z_i$, is an instance whose generative factors $Z_1,\dots,Z_i, \dots, Z_n$ take on the values $z_1, \dots,z_i',\dots, z_n$. Here $Z_i$'s value is changed from $z_i$ to $z_i'$ through an external intervention $do(Z_i=z_i')$.
\end{definition}

If we observe spurious correlation between two generative factors $Z_i, Z_j$ when they take on the values $z_i$ and $z_j$ respectively, generating counterfactual instances w.r.t. $Z_j$ with the intervention $do(Z_j=z_j')$ and adding the counterfactual instances to original data breaks the correlation between $Z_i, Z_j$. With this idea, we now present our algorithm to generate counterfactual images in a systematic manner remove confounding from observational data.

\section{CONIC: Methodology}
\vspace{-9pt}

Our goal is to remove the effect of confounding in the observational data on a downstream task such as classification. To this end, we propose a way to systematically generate counterfactual data that can marginalize the effect of any confounding edge $C_i\rightarrow Z_j$ in Fig.~\ref{fig:datagen} (a) as explained below.

\noindent \textbf{Removing The Confounding Effect of $C_i\rightarrow Z_j$:} In the causal graphs of the form~\ref{fig:datagen}(a), for paths of the form $Z_j\leftarrow C_i\rightarrow Z_l$, we call the edges $C_i\rightarrow Z_j$ and $C_i\rightarrow Z_l$ as confounding edges since together, their existence is the reason for confounding in the data. Also, let $(z_j^p, z_l^q)$ is one pair of attribute values taken by the variable pair $(Z_j, Z_l)$ under extreme confounding (e.g., in the training set of colored MNIST dataset, correlation coefficient of $0.99$ between \textit{color} and \textit{digit} is observed such that whenever \textit{color} is \textit{red}, \textit{digit} is $7$ etc.). To remove the effect of the confounding edge $C_i\rightarrow Z_j$ w.r.t. the another confounding edge $C_i\rightarrow Z_{l}$ (recall that confounding between $Z_j, Z_l$ is present if and only if there exists a pair of causal arrows $C_i\rightarrow Z_j$ and $C_i \rightarrow Z_{l}$ for some $i$; due to this reason we consider the confounding effect of the confounding edge $C_i\rightarrow Z_j$ w.r.t. another confounding edge $C_i\rightarrow Z_l$), we consider two subsets $T_1, T_2$ of the observational data $\mathcal{D}$ which are constructed as follows. $T_1$ consists of the set of instances for which $Z_j\neq z_j^p$ and $Z_l=z_l^q$, $T_2$ consists of the set of instances for which $Z_j = z_j^p$ and $Z_l=z_l^q$. The size of $T_1$ is usually much smaller than the size of $T_2$ because of high correlation between $Z_j$ and $Z_l$ (e.g., there are more \textit{red} $7$'s than \textit{non-red} $7$'s). 

\begin{wrapfigure}[20]{r}{0.6\linewidth}
    \centering
    \includegraphics[width=0.6\textwidth]{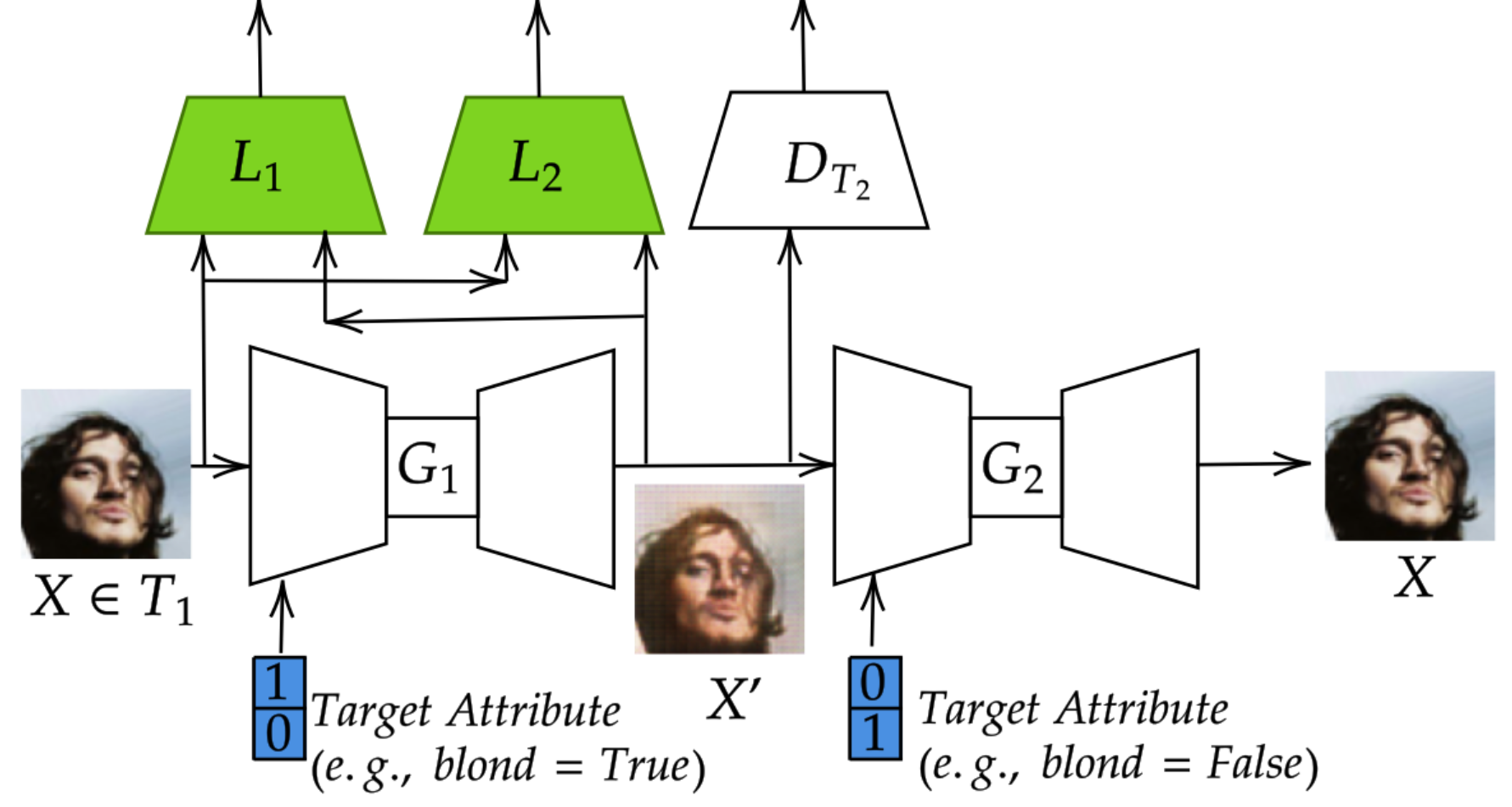}
    \caption{Architecture of the proposed modified conditional CycleGAN to generate counterfactual images. Pre-trained modules are shown in green color and target attribute is shown in blue color. Note that, for simplicity, we only show one pass of conditional CycleGAN (translation from $T_1$ to $T_2$) in this figure.}
    \label{fig:architecture}
\end{wrapfigure}
Now, we learn a mapping $\mathcal{M}$ from the set $T_1$ to the set $T_2$ that changes the attribute $Z_j$ while fixing the value of $Z_l$ at $z_l^q$. That is, for any given instance $X\in T_1$, for which $Z_j\neq z_j^p$, $\mathcal{M}$ maps $X$ to a different instance $X'$ in which the value of the generative factor $Z_j$ is changed to $z_j^p$ (e.g., $\mathcal{M}$ takes \textit{red} $9$ as input and returns \textit{red} $7$ as output). This mapping $\mathcal{M}$ can be thought of as a function performing the 3-step counterfactual inference: learning the underlying generative factors, performing the intervention $do(Z_j=z_j^p)$ and then generating the counterfactual instance $X'$. Now, given an instance $X$ for which $Z_j\neq z_j^p$ and $Z_l\neq z_l^q$, using $\mathcal{M}$, we can generate counterfactual instance $X'$ in which $Z_j=z_j^p$ and $Z_l\neq z_l^q$. These counterfactual instances, when augmented with the original observed dataset $\mathcal{D}$, removes the effect of the confounding edge $C_i\rightarrow Z_j$ w.r.t. the edge $C_i\rightarrow Z_l$. That is, the counterfactual instances, when augmented with original data, breaks the correlation between $Z_j$ and $Z_l$. This process can now be repeated systematically for each confounding edge to generate counterfactual instances that remove the spurious correlations. Such augmented data points which differ from original data points w.r.t. only one feature (e.g., if original image is a \textit{male} with \textit{blond} hair color, augmented image is same \textit{male} with \textit{black} hair color) are referred as \textit{coupled sets} by~\citep{modelpatching}, \textit{images generated by causal essential transformations} by~\citep{oodgen}. The overall procedure to generate counterfactual instances is summarized in Algorithm~\ref{algo:cfgeneration}.

Earlier works use CycleGAN to generate counterfactual images that differ from original image by a single attribute/feature~\citep{oodgen,modelpatching}. Given two domains/sets of images that differ w.r.t. only one generative factor $Z_j$, a CycleGAN can learn to translate between the two domains by changing the attribute value of $Z_j$. In this case, one can think of CycleGAN as a function performing the required intervention $Z_j$ and generating counterfactual instance without modeling the underlying causal process. Concretely, CycleGAN is an architecture used to perform unsupervised domain translation using unpaired images. In a CycleGAN, a generator $G_1$ first transforms a given image $X$ from a domain/set $T_1$ into $X'$ so that $X'$ appears to come from another domain/set $T_2$ such that certain features from input $X$ are preserved in the output $X'$. A discriminator $D_{T_2}$ then classifies whether the translated image $X'$ is original (i.e., sampled from $T_2$) or fake (i.e., generated by $G_1$). A second generator $G_2$ transforms the image $X'$ back to original image $X$ to ensure that $G_1$ is using the contents of $X$ to generate $X'$. The same procedure is repeated to translate images from domain $T_2$ into domain $T_1$. The loss function of CycleGAN can be written as follows.
\begin{equation}
    \mathcal{L}_{CycleGAN} = \mathcal{L}_{GAN}(G_1, D_{T_{2}}, X, X')+\mathcal{L}_{GAN}(G_2, D_{T_{1}}, X', X)+\mathcal{L}_{cycle}(G_1, G_2)
\end{equation}
Where $\mathcal{L}_{GAN}$ is simple Generative Adversarial Network (GAN)~\citep{gan} loss and $\mathcal{L}_{cycle}$ is cycle consistency loss measuring how well the output of $G_2$ is matching with the original input $X$. For example, $\mathcal{L}_{cycle}(G_1, G_2) = \mathbb{E}_{X\sim \mathcal{D}}[||G_2(G_1(X))-X||_1]$ can ensure that $G_2(G_1(X))=X$. In this work, to learn the mapping function $\mathcal{M}$, we use conditional variant of CycleGAN to leverage the supervision in terms of attribute values. For each generator, along with input, we also feed a desired target attribute as shown in the Figure~\ref{fig:architecture}.

To improve the quality of counterfactual images generated by conditional CycleGAN under extreme confounding, we propose a modification to conditional CycleGAN as detailed below. As discussed earlier, $X'$, the output of $G_1$, can be thought of as a counterfactual image of $X$. When changing the feature $Z_j$ of $X$, we keep the feature $Z_l$ fixed. That is, the representation for $Z_j$ in both $X$ and $X'$ should be different and the representation for $Z_l$ in both $X$ and $X'$ should be same. To ensure this, as shown in Figure~\ref{fig:architecture}, along with two generators $G_1, G_2$ and a discriminator $D_{T_2}$ that are part of conditional CycleGAN, we add two pre-trained discriminators $L_1, L_2$ (shown in green color in Fig.~\ref{fig:architecture}). $L_1$ takes two images $X, X'$ as input and returns high penalty if the representation of $Z_j$ is similar in $X, X'$ and small penalty otherwise. $L_2$ takes two images $X, X'$ as input and returns high penalty if the representation of $Z_l$ is different and small penalty otherwise. Thus, our overall objective to generate good quality counterfactual images is to train the modified conditional CycleGAN by minimizing the following objective.
\begin{equation}
\begin{aligned}
        \mathcal{L}_{conic} &= \mathcal{L}_{CycleGAN} + \alpha(- \mathcal{L}_{contrastive}(L_1(X), L_1(G_1(X))) + \mathcal{L}_{contrastive}(L_2(X),L_2(G_1(X)))\\
    &-\mathcal{L}_{contrastive}(L_1(X'), L_1(G_2(X')))  
    +\mathcal{L}_{contrastive}(L_2(X'), L_2(G_2(X'))))
\end{aligned}
\end{equation}
Where $\alpha$ is a hyperparameter and $\mathcal{L}_{contrastive}$ is the contrastive loss~\citep{contrastive}. For a pair of images $(X,X')$, $\mathcal{L}_{contrastive}$ defined as follows.
\begin{equation}
    \label{contrastive}
    \mathcal{L}_{contrastive} (X, X') = AD^2 + (1-A)\max(\epsilon-D, 0)^2
\end{equation}
Where $A=1$ if $X, X'$ belong to same class (or have same attribute values), $A=0$ if $X, X'$ belong to different classes (or have different attribute values). $D$ is the distance between the representations of $X, X'$ (e.g., Euclidean distance). $\epsilon$ is the margin of error allowed between two representations of the images of different classes. $L_1$ and $L_2$ are pre-trained models and the parameters of $L_1$ and $L_2$ are fixed. That is, the loss values returned by $\mathcal{L}_{contrastive}$ are only used to update the trainable parameters of conditional CycleGAN. 

\begin{algorithm}[tb]
\caption{Counterfactual Generation to Remove the Effect of Confounding Edge $C_i\rightarrow Z_j$}
\label{algo:cfgeneration}
\begin{algorithmic}
   \STATE {\bfseries Result:} Counterfactual images that remove the confounding effect caused by the edge $C_i\rightarrow Z_j$
   \STATE {\bfseries Input:} $\mathcal{D}=\{X_i\}_{i=1}^N$, $\texttt{Nodes} = \{Z_l | C_i\rightarrow Z_j \& C_i \rightarrow Z_l \}$
   \STATE {\bfseries Initialize:} $\texttt{cf\_images}=[]$

    \FOR{each $Z_l\in \texttt{Nodes}$}
    \STATE{$T_1 = \{X\in \mathcal{D}|Z_j\neq z_j^p\& Z_l=z_l^q\}$} \textbackslash * \textit{divide data into two domains using attribute values} * \textbackslash 
    \STATE{$T_2 = \{X\in \mathcal{D}|Z_j = z_j^p\& Z_l=z_l^q\}$}
    \STATE{$\mathcal{M}=\texttt{conditional CycleGAN}(T_1, T_2)$} \hspace{10pt} \textbackslash * \textit{Learn $\mathcal{M}$ to translate $T_1$ to $T_2$} * \textbackslash
    \STATE{$\texttt{Factual\_Imgs} = \{X\in \mathcal{D}|Z_j\neq z_j^p\& Z_l\neq z_l^q\}$}\hspace{10pt}  \textbackslash * \textit{Pick factual images from train set} * \textbackslash
    \STATE{\texttt{CFs} = $\mathcal{M}(\texttt{Factual\_Imgs})$} \hspace{10pt}  \textbackslash * \textit{Generate counterfactuals using $\mathcal{M}$} * \textbackslash
    \STATE{Append\ \texttt{CFs}\ to\ \texttt{cf\_images}}
    \ENDFOR
    \STATE {return $\texttt{cf\_images}$}
\end{algorithmic}
\vspace{-2pt}
\end{algorithm}

\noindent \textbf{A Downstream Task - Image Classification:}
To measure the goodness of counterfactual generation under confounding using Algorithm~\ref{algo:cfgeneration}, we consider the classification task on the unconfounded test set as a downstream task. Let $\mathcal{D}^{aug} = \{(X_i, Y_i)\}_{i=1}^M$ be the dataset consisting of original data points from $\mathcal{D}$ and corresponding counterfactual data points. Usual empirical risk minimizer minimizes the following loss over $\mathcal{D}$.
\begin{equation}
\label{ermloss}
    \mathcal{L}_{erm} \coloneqq \mathbb{E}_{(X,y)\sim \mathcal{D}} [l(f_{\theta}(X), y)]
\end{equation}
Where $l$ is cross entropy loss. Using $\mathcal{D}^{aug}$, we minimize the following loss $\mathcal{L}_{aug}$:
\begin{equation}
\label{conicloss}
    \mathcal{L}_{aug} \coloneqq \mathbb{E}_{(X,y)\sim \mathcal{D}^{aug}} [l(f_{\theta}(X), y)]
\end{equation}

To further improve the performance of a classifier using $\mathcal{D}^{aug}$, for each pair of images $X_i, X_j$ we minimize the contrastive loss $\mathcal{L}_{contrastive} (X_i, X_j)$ on the logits in the final layer. Now, the final objective to optimize for classification task is to minimize the following loss.
\begin{equation}
\label{overallobjective}
    \mathcal{L} = \mathcal{L}_{aug }+\lambda \mathbb{E}_{(X_i,X_j)\sim (\mathcal{D}^{aug} \times \mathcal{D}^{aug})}[\mathcal{L}_{contrastive}(X_i, X_j)]
\end{equation}
Where $\lambda>0$ is a regularization hyperparameter. 


\section{Experiments and Results}
\vspace{-5pt}
In this section, we present the experimental results on both synthetic (MNIST variants) and real world (CelebA) datasets. Having access to the ground truth generative factors (i.e., $Z_1,\dots, Z_n$) of images,we artificially create confounding in the training data and we leave test data to be unconfounded (i.e., no correlation among generative factors). 
We compare CONIC with various baselines including traditional Empirical Risk Minimizer (ERM), Conditional GAN (CGAN) ~\citep{gan}, Conditional VAE (CVAE) ~\citep{vae}, Conditional-$\beta$-VAE (C-$\beta$-VAE) ~\citep{betavae},  AugMix~\citep{augmix}, CutMix~\citep{cutmix}, Invariant Risk Minimization (IRM)~\citep{arjovsky2019invariant}, and Counterfactual Generative Networks (CGN)~\citep{cgn}. To check the goodness of each of these methods, we check how well the performance of the downstream classifier on the test set is improved using the augmented images.

\noindent \textbf{MNIST Variants:}
We construct the following three synthetic datasets based on MNIST dataset~\citep{mnist} and its colored,
texture, and morpho (where the digit thickness is controlled; Fig.~\ref{morpho}) variants~\citep{arjovsky2019invariant,morpho,cgn}: (i) colored morpho MNIST (CM-MNIST), (ii) double colored morpho MNIST (DCM-MNIST), and (iii) wildlife  morpho MNIST (WLM-MNIST). We consider extreme confounding among generative factors as explained below.
\begin{wrapfigure}[13]{r}{0.54\linewidth}
\vspace{-9pt}
\begin{subfigure}{.25\columnwidth}
   \centering
    \includegraphics[width=1\linewidth]{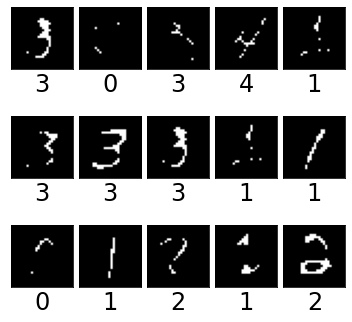}
    \label{fig:thin_images}
\end{subfigure} 
\begin{subfigure}{.25\columnwidth}
   \centering
    \includegraphics[width=1\linewidth]{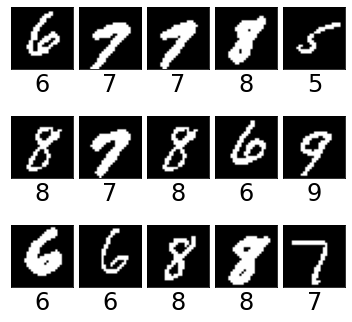}
    \label{fig:thick_images}
\end{subfigure}
\vspace{-15pt}
\caption{\footnotesize Left: sample thin morpho MNIST images and corresponding labels. Right: Sample thick morpho MNIST images and corresponding labels.}
\label{morpho}
\end{wrapfigure}
For the experimental results shown in Table~\ref{mnist_baselines}, in the training set of CM-MNIST dataset, the correlation coefficient between digit label and digit color $r(label, color)$ is $0.95$ and the digits from $0$ to $4$ are thin and digits from $5$ to $9$ are thick (see Figure~\ref{morpho}). That is, $r(label, thin)=1$ if the digit is in [0,1,2,3,4] else $r(label, thick)=1$.  In the training set of DCM-MNIST dataset, digit label, digit color, and background color jointly take a fixed set of values 95\% of the time. That is, $r(label, color)=r(color, background)=r(label, background)=0.95$ and the digits from $0$ to $4$ are thin and digits from $5$ to $9$ are thick. In the training set of WLM-MNIST dataset digit shape, digit texture, and background texture jointly take a fixed set of attribute values 95\% of the time and the digits from $0$ to $4$ are thin and digits from $5$ to $9$ are thick. 

\begin{table}
\vspace{-10pt}
\centering
\scalebox{0.95}{
\begin{tabular}{lcccc}
\toprule
\textbf{Model} & \textbf{CM-MNIST} & \textbf{DCM-MNIST} & \textbf{WLM-MNIST}& \textbf{CelebA} \\
\midrule
ERM& 46.41$\pm$ 0.81\% & 43.31 $\pm$ 2.30\%  & 28.28 $\pm$ 0.70\%& 70.64 $\pm$ 6.93\%\\
CGAN & 41.86 $\pm$ 1.79\%& 30.66 $\pm$ 3.86\%& 17.50 $\pm$ 0.85\% &70.99 $\pm$ 2.35\%\\
CVAE & 49.58 $\pm$ 1.50\% & 41.99 $\pm$ 1.10\% & 34.19 $\pm$ 1.58\%&71.50 $\pm$ 1.82\%\\
C-$\beta$-VAE&51.22 $\pm$ 1.00\%&51.58 $\pm$ 2.36\% &33.90 $\pm$ 1.87\%&74.29 $\pm$ 0.65\%\\
AugMix&47.36 $\pm$ 0.01\%&44.85 $\pm$ 0.02\% &26.30 $\pm$ 1.30\%&71.93 $\pm$ 4.64\%\\
CutMix&20.44 $\pm$ 1.22\%&23.10 $\pm$ 2.98\%&12.08 $\pm$ 1.59\%&73.66 $\pm$ 0.76\%\\
IRM&55.25 $\pm$ 0.89\%&49.71 $\pm$ 0.71\%&50.26 $\pm$ 0.48\%&72.30 $\pm$ 2.71\%\\
CGN &   42.15 $\pm$ 3.89\%&47.50 $\pm$ 2.18\% & 43.84 $\pm$ 0.25\%&69.25 $\pm$ 0.29\%\\
\midrule
CONIC & \textbf{65.57 $\pm$ 0.34\%}&\textbf{92.41 $\pm$ 0.26\%}&\textbf{77.72$\pm$ 1.00\%}&\textbf{79.56 $\pm$ 1.28\%}\\
\bottomrule
\end{tabular}
}
\caption{Test set accuracy results on MNIST variants and CelebA}
\label{mnist_baselines}
\vspace{-10pt}
\end{table}
In all of these MNIST variants, test set images are unconfounded (e.g., in the test set of DCM-MNIST, any digit can be thin or think, can be in any background color, can be in any foreground color). In these experiments, under extreme confounding, our goal is to generate counterfactual images that break the confounding among generative factors. We evaluate models on this grounds by training a classifier using the augmented data and testing it on the unconfounded test data. Table~\ref{mnist_baselines} shows the results in which CONIC outperforms all the baselines. See Appendix for comparison of augmented images by various baselines. Coninc uses only $10000$, $15000$, $15000$ counterfactual images in CM-MNIST, DCM-MNIST, and WLM-MNIST experiments as augmented images respectively to get improved performance. The regularization hyperparameter $\lambda$ in Equation~\ref{overallobjective} set to $0.5$ for all MNIST experiments.

\noindent \textbf{CelebA:} Unlike MNIST variants, CelebA~\citep{liu2015faceattributes} dataset implicitly contains confounding (e.g., the percentage of \textit{males} with \textit{blond hair} is
different from the percentage of \textit{females} with \textit{blond hair}, in addition to the difference in the total number of \textit{males} and \textit{females} in the dataset). In this experiment, we consider the performance of a classifier trained on the augmented data that predicts hair color given an image. Our test set is the set of \textit{males} with \textit{blond hair}. 

\begin{wrapfigure}[11]{r}{0.6\linewidth}
\centering
    \includegraphics[width=1\linewidth]{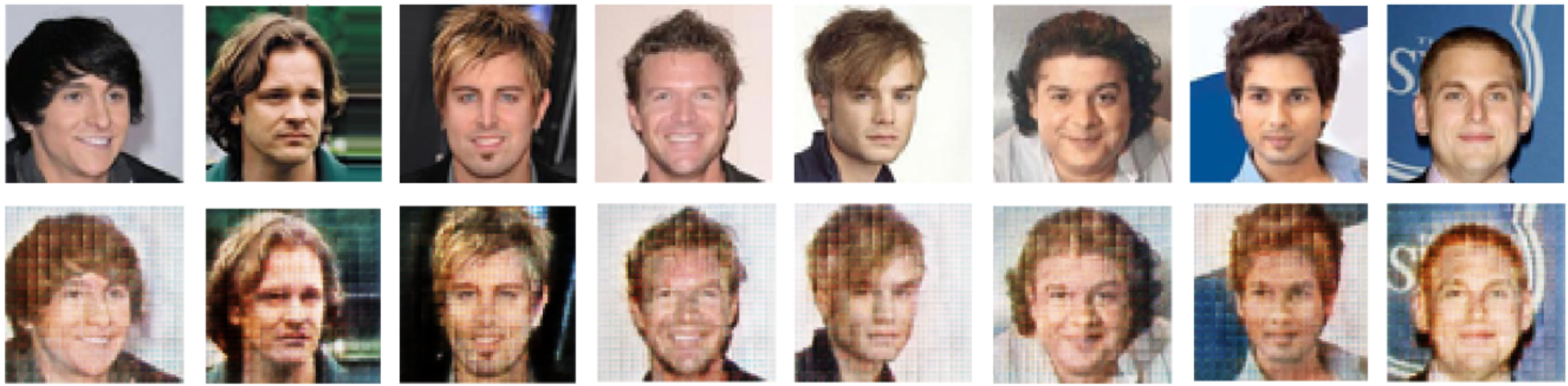}
    \caption{Top: CelebA original images of \textit{males} with \textit{non-blond hair} color. Bottom: Counterfactual images of \textit{males} with \textit{blond hair} generated using Algorithm~\ref{algo:cfgeneration}}
    \label{fig:celeba_cfs}
\end{wrapfigure}
We train models on the train set and test the performance on the set of \textit{males} with \textit{blond hair}. Since the number of \textit{males} with blond hair is very low in the dataset (approximately 4\% of \textit{males} have \textit{blond hair}), we show that the augmenting the train set with only 10000 images of \textit{males} with \textit{blond hair} improves the performance over baselines (see Table~\ref{mnist_baselines}) whereas other baselines require more than 50000 augmented images to get minor improvement over ERM. Given a \textit{male} image with \textit{non-blond hair}, CONIC generates the counterfactual image with \textit{blond hair} without changing the \textit{male} attribute (see Figure~\ref{fig:celeba_cfs} for sample counterfactual images). We also note that the deterministic models such as CGN fail when they are applied to a different task where the number and type of generative factors are not fixed and are difficult to separate (e.g., CelebA). CGN results in table~\ref{mnist_baselines} are obtained with only 1000 counterfactual images as augmented data points. When we increase the number of counterfactual instances, performance of CGN reduces further.

\noindent \textbf{Time Complexity Analysis:}
Apart from its simple methodology, CONIC brings
\begin{wraptable}[9]{r}{0.45\linewidth}
    \centering
    \vspace{-5pt}
    \begin{tabular}{lll}
        \toprule
        Dataset & CONIC & CGN \\
        \midrule
        CM-MNIST & 2.76 $\pm$ 0.19 &103 $\pm$ 1.50 \\
        DCM-MNIST & 2.22 $\pm$ 0.01&103 $\pm$ 2.04\\
        WLM-MNIST & 1.22 $\pm$ 0.01 & 111 $\pm$ 2.50\\
         \bottomrule
    \end{tabular}
    \caption{Run time (in minutes) of CONIC compared to CGN on MNIST variants}
    \label{tab:runtime}
\end{wraptable}
additional advantages in terms of computing time required to train the model that generates counterfactual images. As shown in Table~\ref{tab:runtime}, the time required to run our method to generate counterfactual images w.r.t. a generative factor $Z_j$ is significantly less than CGN that learns deterministic causal mechanisms as discussed in Section~\ref{relatedwork}. Even though we used CycleGAN in this work, for the cases where the number of generative factors are more, StarGAN~\citep{stargan} can be used to minimize the time required to learn the mappings from one domain to another domain~\citep{oodgen, modelpatching}.

\section{Conclusions}
\vspace{-7pt}

We studied the adverse effects of confounding in observational data on the performance of a classifier. We showed the relationship between confounding and correlation in the causal processes considered, and we proposed a methodology to remove the correlation between the target variable and generative factors that works even when the dataset is highly confounded. Specifically, we proposed a counterfactual data augmentation method that systematically removes the confounding effect rather than addressing the confounding problem through random augmentations. Using the generated counterfactuals leads to substantial increase in a  downstream classifier's accuracy. 
That said, we observed that the counterfactual quality can still be improved, which will be interesting future work.

\bibliography{iclr2023_conference}
\bibliographystyle{iclr2023_conference}

\clearpage
\appendix
\section*{Appendix}
In this appendix, we include the following additional information, which we could not fit in the main paper due to space constraints:
\begin{itemize}
    \item Additional implementation details
    \item More details on related work
    \item Empirical evidence on the relationship between confounding and correlation
    \item Ablation studies
    \item Counterfactual images of MNIST variants for various methods
    
\end{itemize}
\section{Additional Implementation Details}
The generators used in CycleGAN are U-Networks~\cite{unet} made of a 4-layer encoder and a 4-layer decoder. The discriminators are 4-layer encoders that output the probability of an image being a particular class. Since all the contrastive loss terms in Equation 4 work together towards removing the confounding effect of a confounding edge (unlike loss functions where each term has a different purpose). Hence, we use a common weighting term $\alpha$ in Equation 4. For all the experiments, batch size of 256 is used to train classifier (Equation 8). In all of the architectures, \textit{leaky-relu} activation is used as an activation and \textit{sigmoid} activation is used in the final layer to get probabilities. \textit{Adam} optimizer is used in all the experiments.
\section{More on Related Work}
In this section, we continue with the related work presented in the Section~\ref{relatedwork} of the main paper. 
~\cite{investigation} discusses a potential issue for a counterfactual data augmentation method, viz.: if counterfactual data augmentation does not consider/augment counterfactuals w.r.t. all robust features that are spuriously correlated with non-robust features, then the performance of a model may drop in unseen distributions. To contrast this with our work, since we are able to quantify the confounding and hence correlation between any pair of generative factors, CONIC can generate all possible counterfactuals, which may in fact help in generate counterfactual images w.r.t. all robust/causal features. Hence, models trained on the counterfactual images generated using CONIC are more robust.

~\cite{databalancing} is similar to our work in performing data augmentation (for e.g., results on CelebA) with a difference that our method can be extended to the performance on the entire test set instead of on the worst group (e.g., MNIST results). Also,~\cite{hu2021a} is similar in a sense to our work, but aims at controllable generation and counterfactual generation in the natural language setting. ~\cite{diffusion} also generates counterfactual images but it assumes the availability of the data generation process and does not explicitly tackle confounding. Our work only assumes access to the attribute information but not the data generating process.

\section{Relationship between correlation and confounding}
Table 3 shows the empirical evidence that confounding is directly proportional to correlation between generative factors in CM-MNIST dataset.

\begin{table}[H]
    \centering
    \begin{tabular}{cc}
    \toprule
        Correlation Coefficient (color, digit) & Confounding (color, digit) (Defn 3) \\
        \midrule
         0.10     & 0.072  \\
 0.20      &  0.249\\
 0.50    &  1.244\\
 0.90     & 3.585\\
 0.95    & 4.041\\
 \bottomrule
    \end{tabular}
    \caption{Relationship between correlation coefficient and confounding between color and digit in CM-MNIST dataset. Correlation is directly proportional to confounding.}
    \label{tab:my_label}
\end{table}
\section{Ablation Studies}
In this section, we present the results on some ablation studies to understand the usefulness of the proposed regularizers. Without the additional regularizers in Eqn 4, the accuracy on the downstream classifier on CelebA  is $73.69\pm1.10$. However, using the additional regularizer, the accuracy improves upto $\mathbf{79.56\pm1.28}$. The additional contrastive loss in Eqn 8 brings a slight improvement over Eqn 7. In CelebA experiments, while accuracy obtained when using Eqn 7 is around $78.73\pm1.22\%$ but using Eqn 8, the accuracy improved to $\mathbf{79.56\pm1.28}$. 

Also, to understand the performance of ERM model using only the 5\% unconfounded data, we experimented on MNIST variants and observed that the accuracy on CM-MNIST, DCM-MNIST, and WLM-MNIST are $34.39\pm0.02$, $17.22\pm0.02$, $17.72\pm0.05$, respectively. These results are worse than ERM model trained on entire training dataset (Table 1). We believe that the reason for these poor results by ERM model is due to the presence of multiple confounders. In MNIST variants, along with the confounding between color and digit, there is another feature called thickness that is challenging to learn, especially when the digits are thin (Figure 3 left). When we take only 5\% unconfounded data, the train set size is very small, with many thin digits, making it difficult for ERM to learn the features.

\section{Counterfactual Images by Various Methods}
Figure 5 and 6 shows the counterfactual images generated by various methods on MNIST variants.
\begin{figure}[H]
    \centering
    \includegraphics[width=0.6\textwidth]{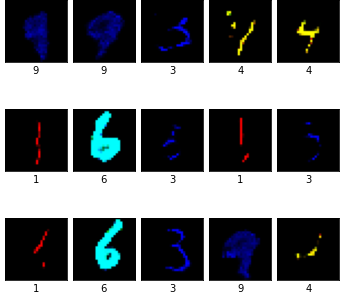}
    \caption{Conditional GAN generations and their conditioned value on CM-MNIST dataset. Because of extreme confounding, digit and shape are not de-confounded by Conditional GAN model.}
    \label{fig:my_label}
\end{figure}
\begin{figure}
\centering
 \begin{adjustbox}{minipage=\linewidth,scale=0.80}
\begin{subfigure}{.30\columnwidth}
  \centering
  \includegraphics[width=1\linewidth]{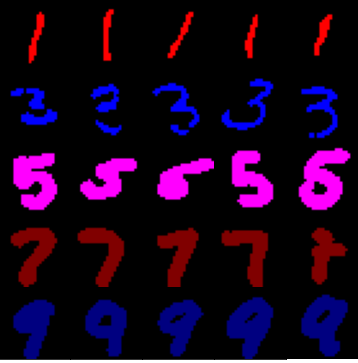}
  \caption*{\footnotesize CM-MNIST Samples}
  \label{cm_real}
\end{subfigure} 
\begin{subfigure}{.30\columnwidth}
 \centering
  \includegraphics[width=1\linewidth]{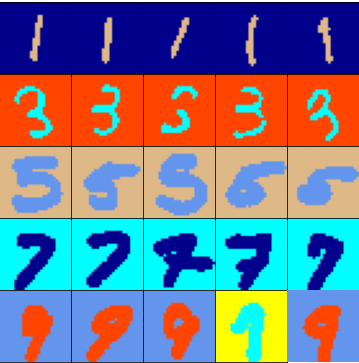}
  \caption*{\footnotesize DCM-MNIST Samples}
  \label{dcm_real}
\end{subfigure} 
\begin{subfigure}{.30\columnwidth}
   \centering
  \includegraphics[width=1\linewidth]{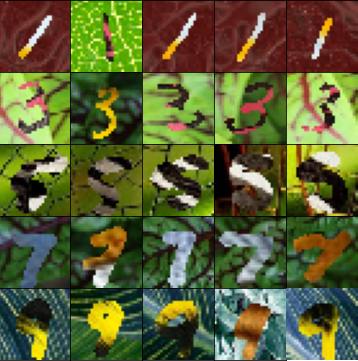}
  \caption*{\footnotesize WLM-MNIST Samples}
  \label{wcm_real}
\end{subfigure} 
\begin{subfigure}{.30\columnwidth}
\centering
  \includegraphics[width=1\linewidth]{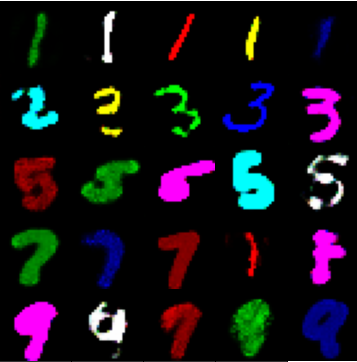}
  \caption*{\footnotesize CM-MNIST CONIC}
  \label{cm_cfs}
\end{subfigure} 
\begin{subfigure}{.30\columnwidth}
\centering
  \includegraphics[width=1\linewidth]{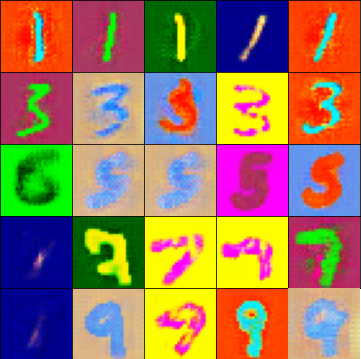}
  \caption*{\footnotesize DCM-MNIST CONIC}
  \label{dcm_cfs}
\end{subfigure} 
\begin{subfigure}{.30\columnwidth}
\centering
  \includegraphics[width=1\linewidth]{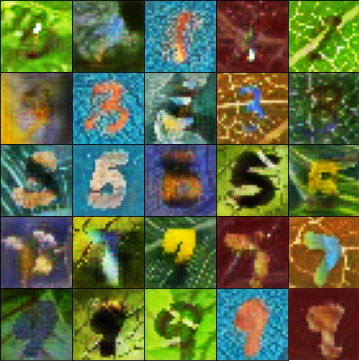}
  \caption*{\footnotesize WLM-MNIST CONIC}
  \label{wcm_cfs}
\end{subfigure} 
\begin{subfigure}{.30\columnwidth}
\centering
  \includegraphics[width=1\linewidth]{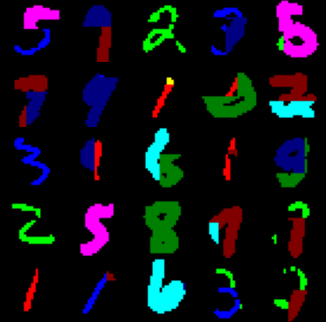}
  \caption*{\footnotesize CM-MNIST CUTMix}
  \label{cm_cutmix}
\end{subfigure} 
\begin{subfigure}{.30\columnwidth}
\centering
  \includegraphics[width=1\linewidth]{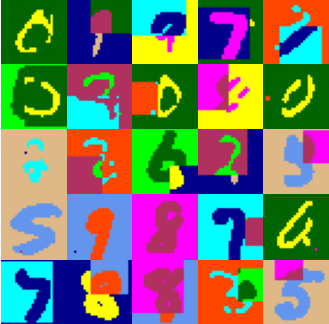}
  \caption*{\footnotesize DCM-MNIST CUTMix}
  \label{dcm_cutmix}
\end{subfigure}
\begin{subfigure}{.30\columnwidth}
\centering
  \includegraphics[width=1\linewidth]{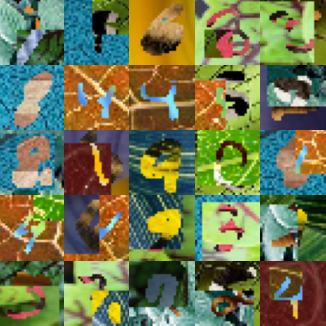}
  \caption*{\footnotesize WLM-MNIST CUTMix}
  \label{wlm_cutmix}
\end{subfigure}
\begin{subfigure}{.30\columnwidth}
\centering
  \includegraphics[width=1\linewidth]{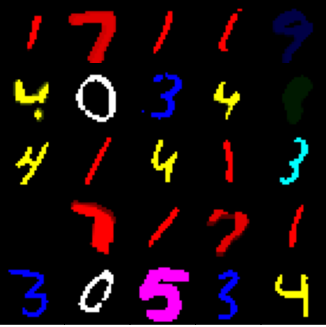}
  \caption*{\footnotesize CM-MNIST AUGMix}
  \label{cm_augmix}
\end{subfigure} 
\begin{subfigure}{.30\columnwidth}
\centering
  \includegraphics[width=1\linewidth]{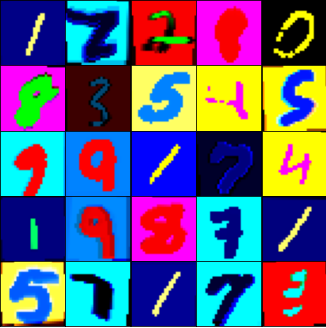}
  \caption*{\footnotesize DCM-MNIST AUGMix}
  \label{dcm_augmix}
\end{subfigure}
\begin{subfigure}{.30\columnwidth}
\centering
  \includegraphics[width=1\linewidth]{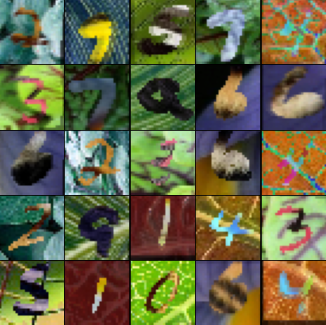}
  \caption*{\footnotesize WLM-MNIST AUGMix}
  \label{wlm_augmix}
\end{subfigure}
\begin{subfigure}{.30\columnwidth}
\centering
  \includegraphics[width=1\linewidth]{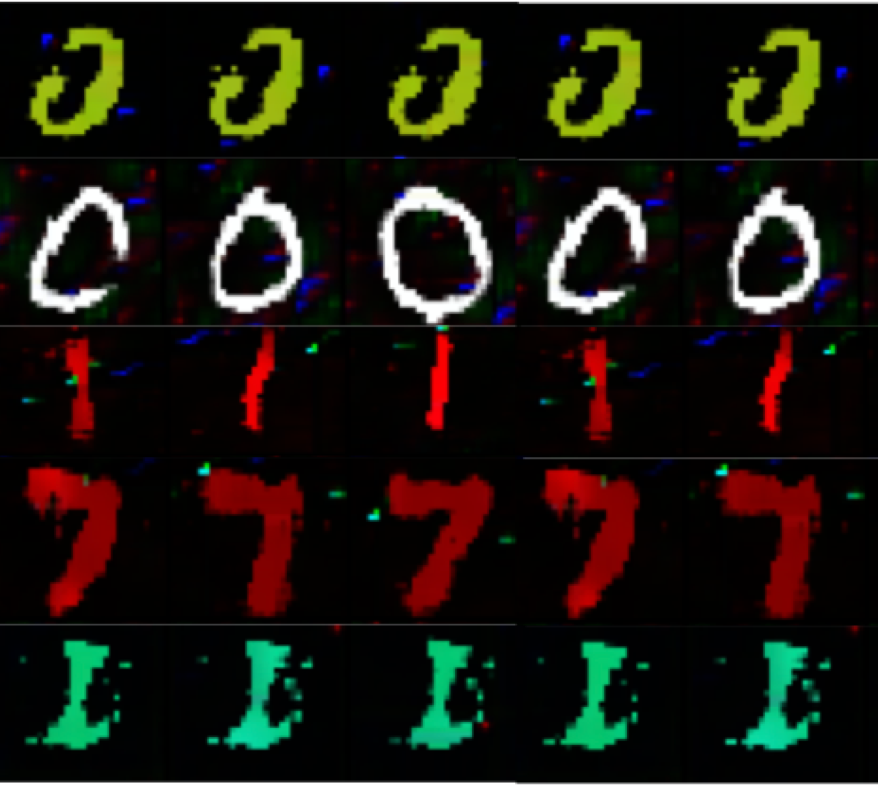}
  \caption*{\footnotesize CM-MNIST CGN}
  \label{cm_cgn}
\end{subfigure}
\begin{subfigure}{.31\columnwidth}
\centering
  \includegraphics[width=1\linewidth]{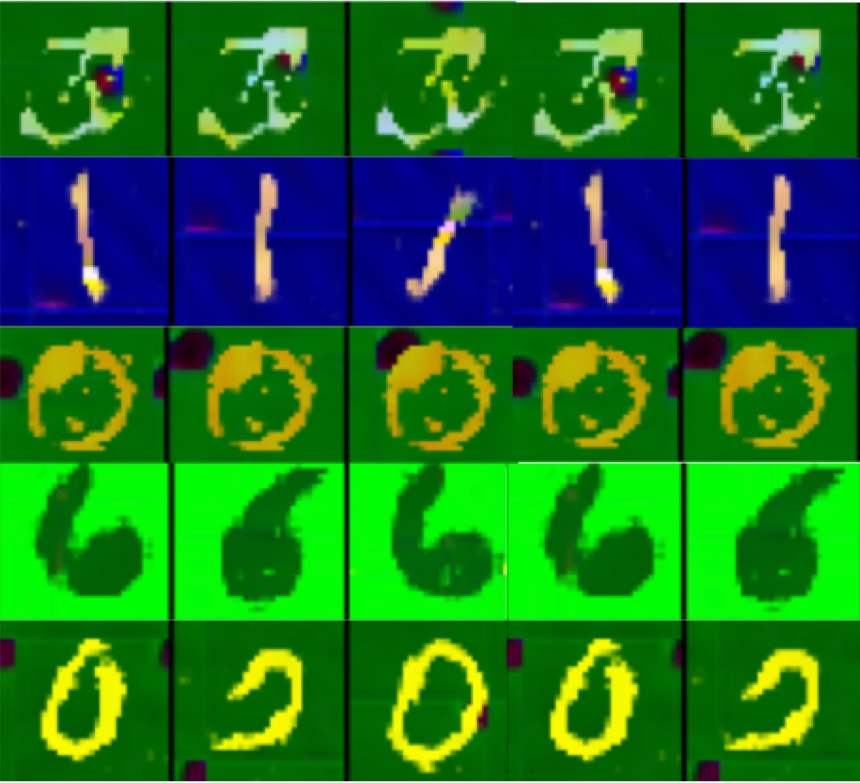}
  \caption*{\footnotesize DCM-MNIST CGN}
  \label{dcm_cgn}
\end{subfigure}
\begin{subfigure}{.31\columnwidth}
\centering
  \includegraphics[width=1\linewidth]{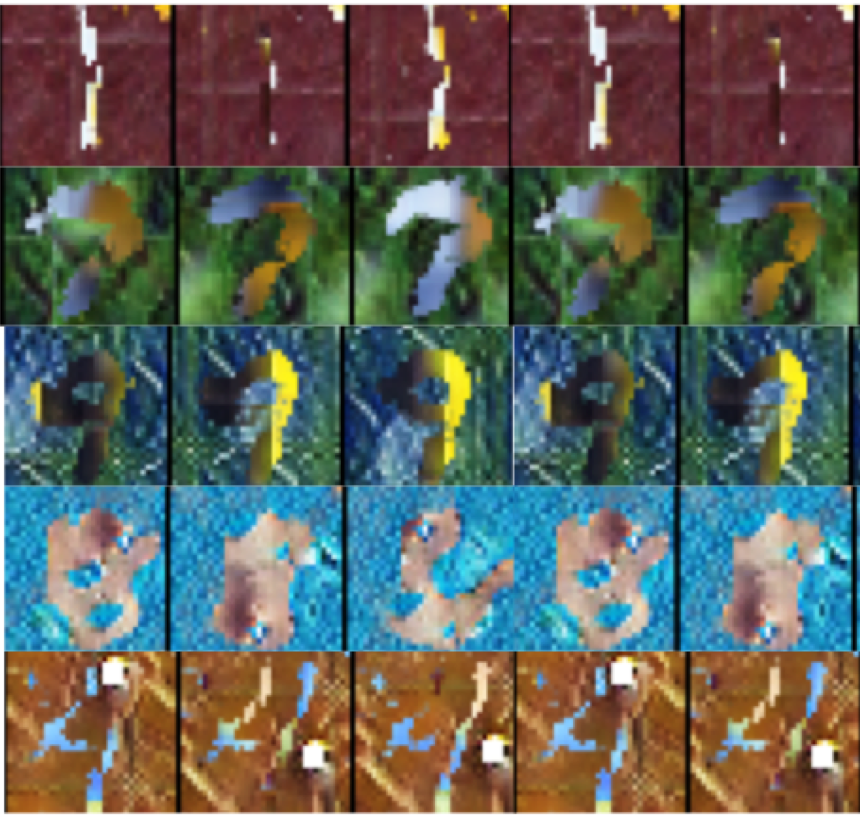}
  \caption*{\footnotesize WLM-MNIST CGN}
  \label{wlm_cgn}
\end{subfigure}
\label{mnists}
\end{adjustbox}
\caption{Sample images from MNIST variants and augmented images by various methods.}
\end{figure}

\end{document}